\newtheorem{theorem}{Theorem}
\newenvironment{proof}[1][Proof]{\begin{trivlist}\item[\hskip \labelsep {\bfseries #1}]}{\end{trivlist}}
\newcommand{\sref}{Section~\ref}
\newcommand{\eref}[1]{eq. (\ref{#1})}
\newcommand{\fref}[1]{Figure~\ref{#1}}
\newcommand{\tref}[1]{Table~\ref{#1}}
\newcounter{comment}
\newcommand{\noa}{SKD\xspace}
\newcommand{\noaLong}{Safe--Kamikaze Distance\xspace}
\newcommand{\noaF}[1]{\ensuremath{SKD(#1)}\xspace}
\newcommand{\noaD}{\ensuremath{\delta}\xspace}
\newcommand{\ds}[1]{\ensuremath{\delta_{#1}}\xspace}
\newcommand{\dsRV}{\ensuremath{D}\xspace}
\newcommand{\pomdpTuple}{\ensuremath{\langle S, A, T, O, Z, R, \gamma \rangle}\xspace}
\newcommand{\pomdpSafe}{\ensuremath{{\mathcal{P}}^{\circ}}\xspace}
\newcommand{\pomdpSafeTuple}{\ensuremath{\langle S^{\circ}, A^{\circ}, T^{\circ}, O^{\circ}, Z^{\circ}, R^{\circ}, \gamma^{\circ} \rangle}\xspace}
\newcommand{\pomdpDie}{\ensuremath{{\mathcal{P}}^{\dagger}}\xspace}
\newcommand{\pomdpDieTuple}{\ensuremath{\langle S^{\dagger}, A^{\dagger}, T^{\dagger}, O^{\dagger}, Z^{\dagger}, R^{\dagger}, \gamma^{\dagger} \rangle}\xspace}
\newcommand{\stSpace}{\ensuremath{S}\xspace}
\newcommand{\stSpaceDie}{\ensuremath{S^{\dagger}}\xspace}
\newcommand{\stSpaceDieAdv}{\ensuremath{S_{adv}^{\dagger}}\xspace}
\newcommand{\st}{\ensuremath{s}\xspace}
\newcommand{\stp}{\ensuremath{s'}\xspace}
\newcommand{\actSpace}{\ensuremath{A}\xspace}
\newcommand{\actSpaceDie}{\ensuremath{A^{\dagger}}\xspace}
\newcommand{\act}{\ensuremath{a}\xspace}
\newcommand{\obsSpace}{\ensuremath{O}\xspace}
\newcommand{\obs}{\ensuremath{o}\xspace}
\newcommand{\transF}{\ensuremath{T}\xspace}
\newcommand{\obsF}{\ensuremath{Z}\xspace}
\newcommand{\rewF}{\ensuremath{R}\xspace}
\newcommand{\rewFSafe}{\ensuremath{R^{\circ}}\xspace}
\newcommand{\rewFDie}{\ensuremath{R^{\dagger}}\xspace}
\newcommand{\bel}{\ensuremath{b}\xspace}
\newcommand{\polOpt}{\ensuremath{\pi^*}\xspace}
\newcommand{\simCarPol}{\ensuremath{\Pi_{car}^{\dagger}}\xspace}
\newcommand{\worldFrame}{\ensuremath{W}\xspace}
\newcommand{\sceneTuple}{\ensuremath{\langle E, S_{adv}, A_{adv}, F_{adv} \rangle}\xspace}
\newcommand{\adv}{\ensuremath{Adv}\xspace}
\newcommand{\oStSp}{\ensuremath{S_{adv}}\xspace}
\newcommand{\oSt}{\ensuremath{s_{adv}}\xspace}
\newcommand{\oPos}{\ensuremath{W_{adv}}\xspace}
\newcommand{\oActSp}{\ensuremath{A_{adv}}\xspace}
\newcommand{\oAct}{\ensuremath{a_{adv}}\xspace}
\newcommand{\oDyn}{\ensuremath{F_{adv}(s_{adv}, a_{adv}, t', \Delta_{t'}, {\mathcal{P}}_{adv})}\xspace}
\newcommand{\oDynUnc}{\ensuremath{{\mathcal{P}}_{adv}}\xspace}
\newcommand{\env}{\ensuremath{E}\xspace}
\newcommand{\aCar}{\ensuremath{Car}\xspace}
\newcommand{\aCarTuple}{\ensuremath{\langle S_{car}, A_{car}, F_{car}\rangle}\xspace}
\newcommand{\aCarStSp}{\ensuremath{S_{car}}\xspace}
\newcommand{\aCarSt}{\ensuremath{s_{car}}\xspace}
\newcommand{\aCarPos}{\ensuremath{W_{car}}\xspace}
\newcommand{\aCarActSp}{\ensuremath{A_{car}}\xspace}
\newcommand{\aCarAct}{\ensuremath{a_{car}}\xspace}
\newcommand{\aCarDyn}{\ensuremath{F_{car}(s_{car}, a_{car}, t, \Delta_{t}, {\mathcal{P}}_{car})}\xspace}
\newcommand{\aCarDynUnc}{\ensuremath{{\mathcal{P}}_{car}}\xspace}
\newcommand{\simCar}{\ensuremath{\widehat{Car}}\xspace}
\newcommand{\simCarTuple}{\ensuremath{\langle \widehat{S_{car}}, A_{car}, \widehat{F_{car}}\rangle}\xspace}
\newcommand{\simCarStSp}{\ensuremath{\widehat{S_{car}}}\xspace}
\newcommand{\appCarDyn}{\ensuremath{\widehat{F_{car}}(s_{car}, a_{car}, t, \Delta_{t}, {\mathcal{P}}'_{car})}\xspace}
\newcommand{\safeTraj}{\ensuremath{\phi}\xspace}
\newcommand{\safeTrajP}{\ensuremath{\phi'}\xspace}
\newcommand{\safeTrajSet}{\ensuremath{\Phi}\xspace}
\newcommand{\dieTraj}{\ensuremath{\psi}\xspace}
\newcommand{\dieTrajP}{\ensuremath{\psi'}\xspace}
\newcommand{\dieTrajSet}[1]{\ensuremath{\Psi(#1)}\xspace}
\newcommand{\allDieTrajSet}{\ensuremath{\Psi}\xspace}
\newcommand{\ball}[1]{\ensuremath{B(#1)}\xspace}
\newcommand{\var}[1]{\ensuremath{Var(#1)}\xspace}
\newcommand{\safeDist}{\ensuremath{\kappa}\xspace}
\newcommand{\mult}{\ensuremath{C}\xspace}
\newcommand{\maxVel}{\ensuremath{\nu_{M}}\xspace}
\newcommand{\acc}{\ensuremath{acc_{car}}\xspace}
\newcommand{\vel}{\ensuremath{\nu}\xspace}
\begin{document}

%\begin{textblock*}{14cm}(7cm,0.5cm) % {block width} (coords) 
%\begin{tcolorbox}
%   \Large{Under double-blind review for RSS'21. Please do NOT distribute.}
%\end{tcolorbox}
%\end{textblock*}

% paper title
\title{\LARGE \bf An NCAP-like Safety Indicator for Self-Driving Cars}
\date{}
% You will get a Paper-ID when submitting a pdf file to the conference system
%\author{Author Names Omitted for Anonymous Review. Paper-204}

\author{Jimy Cai Huang \hspace{2cm} Hanna Kurniawati \\
School of Computing (fka. Research School of Computer Science) \\
Australian National University\\
Email: \{jimy.cai, hanna.kurniawati\}@anu.edu.au
}

%\author{\authorblockN{Jimy Cai Huang \hspace{2cm} Hanna Kurniawati} \\
%\authorblockA{School of Computing (fka. Research School of Computer Science) \\
%Australian National University\\
%Email: \{jimy.cai, hanna.kurniawati\}@anu.edu.au}}
%\and
%\authorblockN{Hanna Kurniawati}
%\authorblockA{School of Computing (fka. Research School of Computer Science) \\
%Australian National University\\
%Email: hanna.kurniawati.cai@anu.edu.au}}

% avoiding spaces at the end of the author lines is not a problem with
% conference papers because we don't use \thanks or \IEEEmembership

% for over three affiliations, or if they all won't fit within the width
% of the page, use this alternative format:
% 
%\author{\authorblockN{Michael Shell\authorrefmark{1},
%Homer Simpson\authorrefmark{2},
%James Kirk\authorrefmark{3}, 
%Montgomery Scott\authorrefmark{3} and
%Eldon Tyrell\authorrefmark{4}}
%\authorblockA{\authorrefmark{1}School of Electrical and Computer Engineering\\
%Georgia Institute of Technology,
%Atlanta, Georgia 30332--0250\\ Email: mshell@ece.gatech.edu}
%\authorblockA{\authorrefmark{2}Twentieth Century Fox, Springfield, USA\\
%Email: homer@thesimpsons.com}
%\authorblockA{\authorrefmark{3}Starfleet Academy, San Francisco, California 96678-2391\\
%Telephone: (800) 555--1212, Fax: (888) 555--1212}
%\authorblockA{\authorrefmark{4}Tyrell Inc., 123 Replicant Street, Los Angeles, California 90210--4321}}

\maketitle

\begin{abstract}
This paper proposes a mechanism to assess the safety of autonomous cars. 
%, inspired by the NCAP testing mechanism. Specifically, the mechanism 
It assesses the car's safety in scenarios where the car must avoid collision with an adversary.  Core to this mechanism is a safety measure, called  \noaLong (\noa), which computes the average similarity between sets of safe adversary's trajectories and kamikaze trajectories close to the safe trajectories. The kamikaze trajectories are generated based on planning under uncertainty techniques, namely the Partially Observable Markov Decision Processes, to account for the partially observed car policy from the point of view of the adversary. We found that \noa is inversely proportional to the upper bound on the probability that a small deformation changes a collision-free trajectory of the adversary into  a colliding one. We perform systematic tests on a scenario where the adversary is a pedestrian crossing a single-lane road in front of the car being assessed ---which is, one of the scenarios in the Euro-NCAP's  Vulnerable Road User (VRU) tests on Autonomous Emergency Braking. Simulation results on assessing cars with basic controllers and a test on a Machine-Learning controller using a high-fidelity simulator indicates promising results for \noa to measure the safety of autonomous cars. Moreover, the time taken for each simulation test is under 11 seconds, enabling a sufficient statistics to compute \noa from simulation to be generated on a quad-core desktop in less than 25 minutes.
%, which is equivalent to the typical time one uses for washing a car. The speed of this assessment opens the possibility for a relatively frequent safety assessment to be taken in simulation, for instance after every patch or software update is performed.
\end{abstract}

%\IEEEpeerreviewmaketitle

\section{INTRODUCTION}

Safety of robotics and autonomous systems, specifically autonomous cars, have become increasingly important. Throughout this paper, the term \emph{autonomous} includes semi-autonomous systems too.  Many work have been proposed to improve the safety of autonomous cars. Most focus on developing autonomous car components (e.g., control and machine learning) with safety guarantees (e.g., \cite{hawkins2021guidance, schwalbe:hal-02442819, shalev2016safe,  Wongpiromsarn:2011:Synthesis}). This paper aims to explore an orthogonal issue, namely the safety assessment. 

Recent work have started to focus on developing testing mechanisms to assess the safety of autonomous cars. Most work in this direction focus on identifying critical testing scenarios\cite{capito2020modeled, Kel18:Scalable}. Since accidents have a small percentage, identifying assessment scenarios that lead to accidents, especially catastrophic accidents, is difficult. Rare event simulations\cite{Kel18:Scalable} and a variety of adversary generation strategies\cite{capito2020modeled, sun2021corner, Wen20:Model} have been proposed to identify such scenarios. Our proposed safety assessment mechanism can benefit from these work too. 

However, in this paper, the purpose of our testing mechanism is not to identify problematic scenarios per se. Rather, we aim to develop a testing mechanism that can eventually help users to easily compare the safety of different autonomous cars, including different versions of the software that run them. Such a safety indicator is akin to the New Car Assessment Program (NCAP) safety rating that has helped users with non-autonomous cars. However, since NCAP testing scenarios are mostly static and performed at most once in the lifetime of a car, we do need to adjust the testing mechanism and safety indicator to be adaptive and efficient enough, such that they are  suitable for autonomous systems and frequent assessment is viable. 
Our proposed testing mechanism is based on the observation that safe autonomous cars must provide sufficient room for errors and uncertainty, in particular due to non-deterministic effects of actions. For instance, different drivers' reaction time in taking over  control to avoid colliding with a pedestrian may result in different outcomes, different road and tyres conditions may result in unexpected collision with a pedestrian, etc.. Therefore, we propose to measure the safety of an autonomous car based on how likely will a safe scenario change into a dangerous one under a small perturbation of the scenario. 

%Core to our mechanism is a safety measure, called \noaLong (\noa), which is based on the average similarity measure between safe and dangerous human trajectories when interacting with the car. Specifically, we use the scenario of a pedestrian crossing the street --a scenario which is used by Euro-NCAP to assess ... . Given a safe pedestrian trajectory, our mechanism computes a kamikaze trajectory closest to the safe trajectory, where kamikaze trajectory is a pedestrian trajectory that causes the pedestrian to collide with the car being tested. The safety measure \noa is then the average distance between multiple pairs of safe and kamikaze trajectories. Experimental results on simple controller and Machine Learning-based controller as provided by the Carla simulator, for a variety of safe trajectories generation, indicate that \noa increases as collision rate decreases.  Theoretically, we show the probability that a safe pedestrian trajectory changes into a colliding one under a small deformation is upper bounded by a value  inversely proportional to \noa. 

To that end, our mechanism will assess the car's safety in scenarios where it must avoid collision with an adversary. Core to our mechanism is a safety measure, called \noaLong (\noa), which is based on the average similarity measure between safe and kamikaze trajectories of the adversary when interacting with the car being assessed. A kamikaze trajectory is an adversary's trajectory that causes the adversary to collide with the car being tested. Given a safe trajectory for the adversary, our mechanism computes multiple kamikaze trajectories closest to the safe trajectory. The safety measure \noa is then the average distance between samples of pairs of safe and kamikaze trajectories. We show the probability that a small deformation changes a collision-free trajectory of the adversary  into a colliding one is upper bounded by a value inversely proportional to \noa. 
 
Our assessment mechanism is general enough to be used with a variety of adversaries, but in this paper, our experiments focus on scenarios where the adversary is a pedestrian crossing a single-lane road in front of the assessed car ---one of the scenarios in the Euro-NCAP's  Vulnerable Road User (VRU) tests on Autonomous Emergency Braking\cite{EuroNcap}. Results on basic controllers and a Machine Learning controller as provided by the Carla\cite{CARLA17} simulator indicate that \noa increases as collision rate decreases. Moreover, our results indicate the time required to compute \noa makes the assessment mechanism to be potentially viable to be performed frequently, such as after every software updates.

\section{Related Work and Background}

\subsection{Related Work}

To ensure safety of autonomous cars, many work have focused on using formal methods for verification of the autonomous vehicle system (e.g., \cite{althoff2014online, Wongpiromsarn:2011:Synthesis}). A short summary of formal methods for autonomous cars is provided in\cite{seshia2015formal}. These approaches require formal specifications, which is often not easy to construct completely due to the complexity of the system and the sheer possibilities of the different scenarios that an autonomous car may encounter.
%and the complexity of an autonomous car make such approaches to face difficulties in ensuring that the vehicle as a system is safe. 

Another line of work is to develop testing mechanisms to assess the safety of autonomous cars. Recently, work in this direction have focused on generating test scenarios that will lead to accidents\cite{capito2020modeled, Kel18:Scalable, sun2021corner, Wen20:Model}. This problem is difficult because accidents are relatively rare. 
%Many of these work could provide testing scenarios that can be used by developers to then improve the safety of autonomous cars.

%In this paper, the purpose of our testing mechanism is not to identify problematic scenarios for the autonomous cars' developers. Instead, our purpose is to develop a testing mechanism that can eventually help users to easily compare and understand the safety of different autonomous systems, including potentially different versions of algorithms and software that runs the autonomous car. Such a measure is akin to the New Car Assessment Program (NCAP) safety rating that have helped users with non-autonomous cars. 

Obviously, a mechanism to test the safety of a car is not new. The well-accepted NCAP testing protocol\cite{GlobalNcap} was introduced in 1979. However, testing scenarios developed by NCAP are mostly static, which is not suitable for autonomous cars. In this paper, we propose a testing mechanism that can utilise these static scenarios as safe trajectories, and then find a kamikaze trajectory close to these safe trajectories to compute \noa. We hope such a mechanism would be more acceptable for regulatory purposes.

To generate safe and kamikaze trajectories, the testing mechanism needs to have a predictive model of the car's behaviour, albeit imperfect. For this purpose, many work can be adopted, such as \cite{Bha20:Online, Fri20:Efficient,  Hoe17:Probabilistic, Sch18:Multiple}. Our kamikaze trajectory generation can also benefit from work on pursuit evasion\cite{chung2011search} and more recently \cite{Kor18:Adaptive}, though the latter is focused on deforming observations rather than an adversary's behaviour.

\subsection{Background}

Two main components in our proposed mechanism is the safety measure \noa, which relies on Fr\'{e}chet distance, and the kamikaze trajectory generation, which is based on the Partially Observable Markov Decision Processes (POMDPs). The following are backgrounds on these two concepts.

\subsubsection{Fr\'{e}chet Distance}

Fr\'{e}chet distance\cite{har2011geometric} measures the similarity between two curves while adhering to the order of points on the curve. Suppose $P: [0,1] \rightarrow {\mathbb{R}}^n$ and $Q: [0, 1] \rightarrow {\mathbb{R}}^n$ are two curves on the same space.  Then the  Fr\'{e}chet distance between these two curves are: 
\begin{equation}
	d(P, Q) = \inf_{\alpha, \beta} \max_{t \in [0,1]} ||P(\alpha(t)) - Q(\beta(t))||
	\label{e:contfrechet}
\end{equation}
among all possible $\alpha: [0,1] \rightarrow [0,1]$ and $\beta: [0,1] \rightarrow [0,1]$, which are continuous reparameterisations of $P$ and $Q$ (respectively) that are non-decreasing and surjective. 

For computational efficiency, in this paper, we use the approximation of \eref{e:contfrechet} via discrete Fr\'{e}chet distance, approximating $P$ and $Q$ as polygonal chains, resulting in the following definition. Suppose $P': (p_1, p_2, p_3, \cdots, p_m)$ and $Q': (q_1, q_2, q_3, \cdots, q_{m'})$, where $p_i, q_j \in {\mathbb{R}}^n$ for $i \in [1, m], j \in [1, m']$. Then the  Fr\'{e}chet distance between these two polygonal chains are: 
\begin{eqnarray}
	&& d(P', Q') = \min_{k, l} \max_{t} ||p_{k(t)} - q_{l(t)}|| \nonumber\\
	&& \textrm{where:} \nonumber\\
	&& (i) \; k(t+1) = k(t)+1 \textrm{ and } l(t+1) = l(t) \textrm{, or} \nonumber\\
	&& (ii) \; k(t+1) = k(t) \textrm{ and } l(t+1) = l(t)+1
	\label{e:frechet}
\end{eqnarray}
Suppose $m \geq m'$, then $t \in [1, m]$, while $k(t)$ and $l(t)$ maps the index $t$ to an index of the points in $P'$ and $Q'$, respectively. This distance can be computed in $O(\frac{mm' \log\log m}{\log m})$ time and  $O(m+m')$ space\cite{agarwal2014computing}.

%This distance function can be applied to continuous and discrete curves. In this paper, we apply the distance function for discrete curves, which is in-line with the discrete-time assumption used in generating the kamikaze trajectory. 

\subsubsection{POMDP}

To generate kamikaze trajectories, we need to consider the policy of the car being assessed. However, depending on the car's software system, this policy is not always explicit and may not even be known exactly prior to execution. Therefore, to the kamikaze trajectory generator, the car's policy is partially observed. Hence, we apply the POMDP to generate kamikaze trajectories. 

Formally, a POMDP is a tuple \pomdpTuple\cite{kaelbling1998planning}. At each time-step, a POMDP agent is in a state $\st \in \stSpace$, executes an action $\act \in \actSpace$, perceives an observation $\obs \in \obsSpace$, and moves to the next state $\stp \in \stSpace$.  The next state is distributed according to $\transF(\st, \act,\stp)$, which is a conditional probability function $P(\stp | \st, \act)$ that represents non-deterministic actions. The observation $\obs$ perceived depends on the observation function \obsF, which is a conditional probability function $P(\obs|\st, \act)$ that represents uncertainty in sensing. The notation \rewF is a reward function, from which the objective function is derived. The notations $\gamma \in (0, 1)$ is the discount factor to ensure that an infinite horizon POMDP problem remains a well-defined optimisation problem. 

The solution to a POMDP problem is an optimal policy \polOpt, that maps beliefs to actions in order to maximise the expected total reward, i.e. $V^*(\bel) = \max_{\act \in \actSpace} \left [ R(\bel, \act) + \gamma \sum_{\obs \in \obsSpace} Pr(\obs | \act, \bel) V^*(\tau(\bel, \act, \obs)) \right ]$, where $R(\bel, \act) = \sum_{\st \in \stSpace} \rewF(\st, \act)\bel(\st)$ and 
$Pr(\obs| \act, \bel) = \sum_{\stp \in \stSpace}\obsF(\stp, \act, \obs)\sum_{\st\in\stSpace}\transF(\st, \act, \stp)\bel(\st)$. The function $\tau$ computes the updated belief after the agent executes \act from \bel and perceives \obs.

\section{Overview of the Assessment Mechanism}

\begin{figure*}[!th]
    \centering
    \includegraphics[width=12cm]{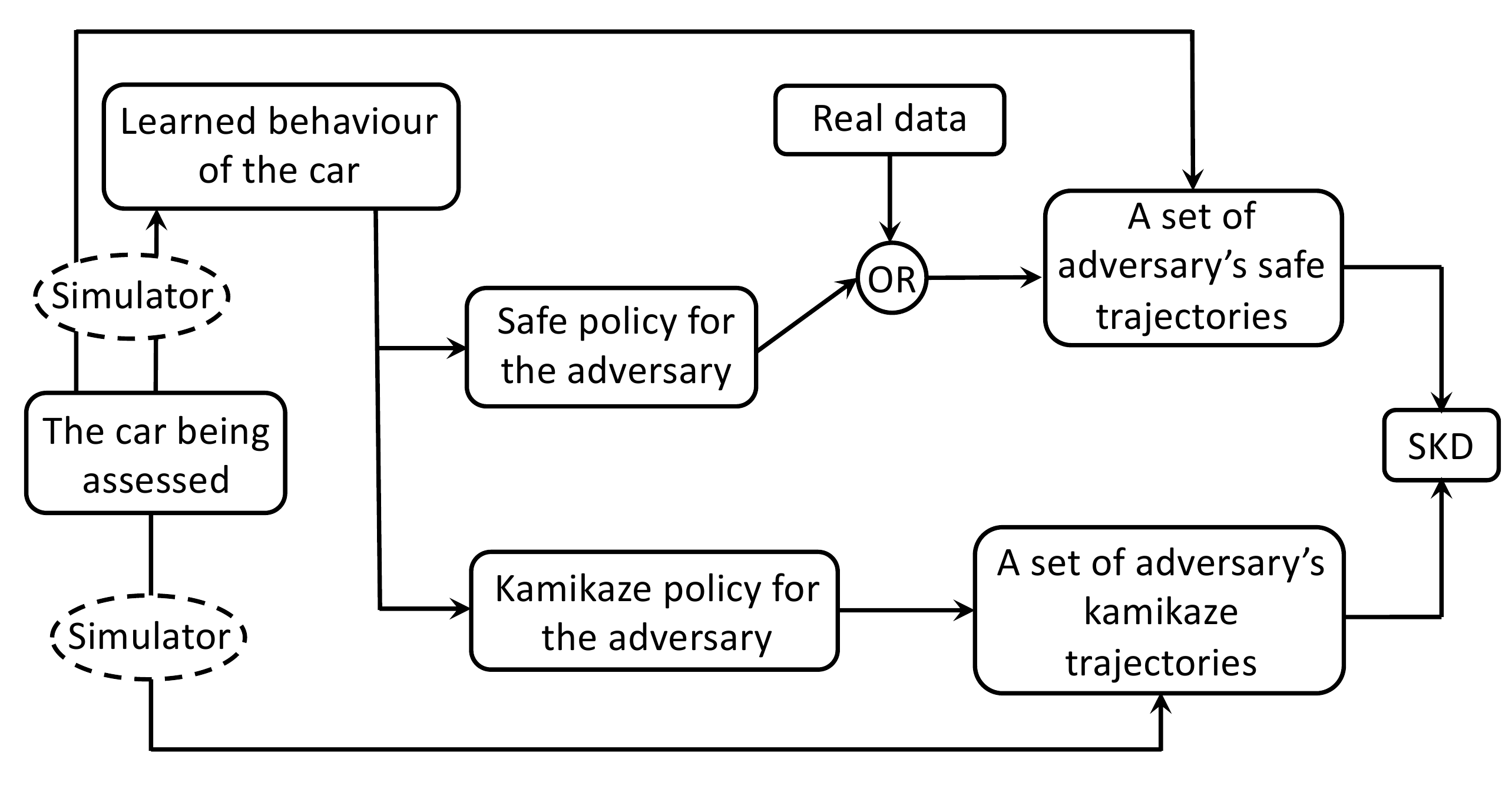}
    \caption{Proposed mechanism to assess the safety of an autonomous car. Dashed ellipse means it may or may not be used.}
    \label{f:systDiag}
\end{figure*}
\fref{f:systDiag} illustrates our proposed assessment mechanism. This mechanism automatically evaluates the safety of an autonomous car's software system as a holistic system. It accepts the software system of the car being assessed and uses a high-fidelity simulator or physical experiments to measure the car's safety under a testing scenario. Testing on the physical system directly is also applicable, for instance using a pedestrian dummy or robot. However, we believe such a test should be supplemented with simulation tests because most probably, we will only be able to perform a small number of physical trials and they will likely incur substantial cost.

A variety of testing scenarios can be used. The only requirement is that they must involve interactions between the car being assessed and an adversary (e.g., a pedestrian or another car), such that the adversary can  crash into the assessed car. Specifically, suppose $\adv = \sceneTuple$ defines an adversary, where: 
\begin{itemize}
	\item \env is a bounded environment, where the adversary and assessed car operates. Positions of objects in \env is defined in a bounded world coordinate space $\worldFrame \subset \mathbb{R}^2$. \\
	Note that our mechanism is sufficiently general for $\worldFrame$ to be in $\mathbb{R}^n$, and hence conceptually, the mechanism can be applied to a variety of robotics systems, such as Unmanned Aerial Vehicles, Autonomous Underwater Vehicles, and even manipulators, though further work on efficient computation of the components are required. 
	\item \oStSp is the set of states of the adversary, which includes the adversary's position in the world frame \worldFrame. The adversary's position is specified as the position of the 2D projection of the adversary's centre of mass in \worldFrame.
	\item \oActSp is the set of actions the adversary can perform. 
	\item \oDyn is a stochastic model of the adversary's dynamics function, which outputs a possible next state for the adversary after the action $\oAct \in \oActSp$ is performed from state $\oSt \in \oStSp$ at time $t'$ for time duration $\Delta_{t'}$ and perturbed by an error distribution \oDynUnc. 
	%This function may not be provided explicitly.
\end{itemize} 
%\begin{itemize}
%	\item $\aCarStSp = \aCarPos \times \aCarOStSp$, with \aCarStSp, $\aCarPos \in {\mathbb{R}}^2$, and \aCarOStSp being the sets of possible states, world coordinate positions, and the rest of relevant state variables of the car being assessed, respectively.
%	\item \aCarAct is the set of actions the car being assessed can perform.
%	\item $\oStSp = \oPos \times \oOStSp$, with \oStSp, $\oPos \in {\mathbb{R}}^2$, and \oOStSp being the set of possible states, the world coordinate position, and the rest of relevant state variables of the adversary, respectively.
%	\item \oAct is the set of actions the adversary can perform. 
%\end{itemize} 
And, suppose the assessed car is defined as $\aCar = \aCarTuple$ where:
\begin{itemize}
	\item \aCarStSp is the set of states of the car being assessed, which includes the car's position in the world frame \worldFrame. The car's position is specified as the position of the 2D projection of the car's centre of mass in \worldFrame.
	\item \aCarActSp is the set of actions the assessed car can perform.
	\item \aCarDyn is the assessed car's dynamics function, which outputs a possible next state of the assessed car after the action $\aCarAct \in \aCarActSp$ is performed from state $\aCarSt \in \aCarStSp$ at time $t$ for a duration of $\Delta_t$ and perturbed by an error distribution \aCarDynUnc. This function may be a simplified function that is far from perfect. 
\end{itemize}
Then, the testing scenario must include the environment, states, and actions 
%$\aCarStSpP \times \aCarActSp$ and  $\oStSpP \times \oActSp$, where $\aCarStSpP \subseteq \aCarStSp$, $\aCarActSpP \subseteq \aCarActSp$, $\oStSpP \subseteq \oStSp$, and $\oActSpP \subseteq \oActSp$
such that \aCarDyn may collide with \oDyn with substantial probability.
%$\aCarDyn \in \aCarStSpP \subseteq \aCarStSp$  may collide with $\oDyn \in \oStSpP \subseteq \oStSp$ with substantial probability.  

Any testing scenario that satisfies the above requirement can be used. To make the scenario concrete,  throughout this paper, we use the scenario where a pedestrian is crossing a single-lane road in front of the assessed car, which is similar to a scenario for testing autonomous emergency braking systems in Euro-NCAP\cite{EuroNcap}.

 %Furthermore, the assessment mechanism should be fast enough, so that it can be reasonably be applied after software updates.  

%% How the framework is used
Given a testing scenario and an assessed car, our mechanism learns a predictive behaviour model of the assessed car under the scenario. The learning itself can be separately or in conjunction with kamikaze and safe trajectories generation of the adversary, via a high-fidelity simulator such as Carla\cite{CARLA17} or direct interaction with the assessed car. Many work\cite{Bha20:Online, Fri20:Efficient, Hoe17:Probabilistic, Sch18:Multiple} have been proposed to learn a predictive model of a self-driving car, and can be used with our mechanism.  
%In this paper, we use the simplest form of learning, which is maximum likelihood. More details on this is discussed 

This learned model is used in generating kamikaze trajectories of the adversary. When real data of the adversary's safe  trajectories are not available, the model is also used to generate the safe trajectories. Details of these trajectory generations are provided in \sref{s:trajGen}.

Once the sets of safe and kamikaze trajectories are generated, our assessment mechanism computes the average Fr\'{e}chet distance between safe and kamikaze trajectories. This average distance can be used as a safety indicator, on how likely a safe trajectory becomes a dangerous one under a small deformation. Details on this distance computation and how it relates to the safety measure of the car being assessed is discussed in the next section.

\section{\noaLong (\noa)}

Suppose \safeTrajSet is the set of safe adversary's trajectories, specified in the world frame \worldFrame. Let \dieTrajSet{\safeTraj} be the set of kamikaze trajectories generated to be as close as possible to the safe trajectory $\safeTraj \in \safeTrajSet$, where $\dieTraj \in \dieTrajSet{\safeTraj}$ is specified in the world frame \worldFrame, and $\allDieTrajSet = \bigcup_{\safeTraj \in \safeTrajSet} \dieTrajSet{\safeTraj}$. Ideally, each \dieTrajSet{\safeTraj} contains kamikaze trajectories with exactly the same distance to \safeTraj, as we take the closest kamikaze trajectories. However, in practice, we approximate by sampling kamikaze trajectories within a pre-defined maximum distance from the safe trajectory. Details on this generation is in \sref{s:kamiTrajGen}. We then compute the \noa between these two sets of trajectories as:
\begin{equation}
	\noaF{\safeTrajSet, \allDieTrajSet} = \frac{1}{|\allDieTrajSet|} \sum_{\safeTraj \in \safeTrajSet} \sum_{\dieTraj \in \dieTrajSet{\safeTraj}} d(\safeTraj, \dieTraj)
	\label{e:trajDist} 
\end{equation}
where $d(\safeTraj, \dieTraj)$ is the discrete Fr\'{e}chet distance (\eref{e:frechet}) between the two trajectories. To ensure this more efficient Fr\'{echet} distance computation can be applied, we assume that trajectories in \safeTrajSet and \allDieTrajSet has been discretized uniformly in the time domain, which means the time to move between two consecutive points in a trajectory are the same everywhere. 

Notice that based on the definition of Fr\'{e}chet distance, $d(\safeTraj, \dieTraj) = \ds{\safeTraj, \dieTraj}$ implies that $\forall_{p \in \safeTraj} \exists_{q \in \dieTraj} \; q \in \ball{p, \ds{\safeTraj, \dieTraj}}$, where $\ball{p, \ds{\safeTraj, \dieTraj}} \subset \mathbb{R}^2$ is a ball centred at $p$ with radius $\ds{\safeTraj, \dieTraj}$. Therefore, $\noaF{\safeTrajSet, \allDieTrajSet}  = \noaD$ implies that on average, a safe adversary trajectory, sampled from the same distribution used to sample \safeTrajSet, can change into an unsafe trajectory after being deformed for less than or equal to \noaD distance away. Moreover, the probability of such a change happening after a very small deformation can be upper bounded by a function of \noaD. Specifically,
%These probability bounds are stated more formally in  the following.
%Before deriving the above bounds, let's first denote \noaRV as  the random variable representing \noaF{\safeTrajSet, \allDieTrajSet}, and \dsRV as the random variable representing the Fr\'{e}chet distance $d(\safeTraj, \dieTraj)$ where $\safeTraj \in \safeTrajSet$ and $\dieTraj \in \dieTrajSet{\safeTrajSet} \in \allDieTrajSet$. Furthermore, let's denote \var{\cdot} as the variance of the random variable. 
\begin{theorem}
	Suppose $\noaF{\safeTrajSet, \allDieTrajSet}  = \noaD$ and \dsRV is the random variable representing the Fr\'{e}chet distance $d(\safeTraj, \dieTraj)$ where $\safeTraj \in \safeTrajSet$ and $\dieTraj \in \dieTrajSet{\safeTrajSet} \in \allDieTrajSet$. Then for a small  real number $\eta \in (0, \delta)$,  $P(d(\safeTrajP, \dieTrajP) \leq \eta) \leq \frac{\var{\dsRV}+ 2 \eta \noaD}{\var{\dsRV} + {\noaD}^2}$, where \safeTrajP and \dieTrajP are any safe and kamikaze trajectories, sampled from the same distribution used to  generate \safeTrajSet and \allDieTrajSet, respectively.\label{th:probInd}
\end{theorem}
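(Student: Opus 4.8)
\emph{Proof strategy.} The plan is to treat \noaD as the mean $E[\dsRV]$ of the Fr\'echet-distance random variable $\dsRV$ --- which is exactly what \eref{e:trajDist} estimates, under the standing assumption (also used in the theorem) that \safeTrajP and \dieTrajP are drawn from the same distributions that generated \safeTrajSet and \allDieTrajSet --- and then to derive the bound from a one-sided tail inequality for $\dsRV$ falling below its mean, followed by an elementary algebraic relaxation.

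First I would note that, since $\eta \in (0, \noaD)$, the gap $\lambda := \noaD - \eta$ is strictly positive, so the event in question is a lower-tail deviation: $\{\, d(\safeTrajP, \dieTrajP) \leq \eta \,\} = \{\, \dsRV - E[\dsRV] \leq -\lambda \,\}$. Cantelli's inequality (the one-sided Chebyshev bound), which needs only that $\dsRV$ have finite variance, then yields
\[
P\big(d(\safeTrajP, \dieTrajP) \leq \eta\big) \;\leq\; \frac{\var{\dsRV}}{\var{\dsRV} + (\noaD - \eta)^2}.
\]
A self-contained derivation of this same estimate: $\dsRV \leq \eta$ forces $(\dsRV - \noaD - u)^2 \geq (\noaD + u - \eta)^2$ for every $u \geq 0$; applying Markov's inequality to $(\dsRV - \noaD - u)^2$ with $E[(\dsRV - \noaD - u)^2] = \var{\dsRV} + u^2$ and then minimising over $u$ (the minimiser being $u = \var{\dsRV}/(\noaD - \eta)$) gives the displayed bound.

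Second, I would show that the Cantelli bound is dominated by the claimed one, that is,
\[
\frac{\var{\dsRV}}{\var{\dsRV} + (\noaD-\eta)^2} \;\leq\; \frac{\var{\dsRV} + 2\eta\noaD}{\var{\dsRV} + {\noaD}^2}.
\]
Both denominators are positive, so cross-multiplying and cancelling reduces this to $\eta\big(\eta\,\var{\dsRV} + 2\noaD(\noaD - \eta)^2\big) \geq 0$, which is immediate from $\eta > 0$, $\noaD > 0$ and $\var{\dsRV} \geq 0$. Chaining the two displays gives the theorem. (Equivalently, the Paley--Zygmund inequality applied to the nonnegative variable $\dsRV$ at level $\eta = (\eta/\noaD)\,E[\dsRV]$ gives $P(\dsRV > \eta) \geq (\noaD - \eta)^2 / (\var{\dsRV} + {\noaD}^2)$ directly, whence $P(\dsRV \leq \eta) \leq (\var{\dsRV} + 2\eta\noaD - \eta^2)/(\var{\dsRV} + {\noaD}^2)$, and dropping the $-\eta^2$ recovers the stated form in one step; I would keep whichever route matches the authors' taste.)

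Nothing in the above uses properties of the Fr\'echet distance beyond non-negativity of $\dsRV$, and finiteness of $E[\dsRV]$ and $\var{\dsRV}$ is automatic because $\worldFrame$ is bounded and hence $\dsRV$ is a bounded random variable. The step I expect to be the real obstacle is therefore the probabilistic modelling rather than the inequality itself: one must make precise that the kamikaze-trajectory generation procedure, which pairs each safe trajectory with nearby kamikaze trajectories, induces a well-defined distribution on Fr\'echet distances whose mean is the empirical average in \eref{e:trajDist}, so that the hypothesis $\noaF{\safeTrajSet, \allDieTrajSet} = \noaD$ genuinely pins down $E[\dsRV] = \noaD$ for the freshly sampled pair $(\safeTrajP, \dieTrajP)$. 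One should also flag that the resulting bound is only informative for $\eta$ well below $\noaD$ (for $\eta$ near $\noaD$ the right-hand side exceeds $1$ and the statement is vacuous), consistent with the ``small $\eta$'' wording. With these points granted, the remainder is routine.
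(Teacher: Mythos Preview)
Your proposal is correct. The paper's own proof is exactly the Paley--Zygmund route you sketch in parentheses: it records that \dsRV is bounded (hence \var{\dsRV} finite by Popoviciu) and nonnegative, applies Paley--Zygmund at level $\eta/\noaD$ to get $P(\dsRV < \eta) \leq 1 - (\noaD-\eta)^2/(\var{\dsRV}+\noaD^2) = (\var{\dsRV}+2\eta\noaD-\eta^2)/(\var{\dsRV}+\noaD^2)$, and then drops the $-\eta^2$.

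Your primary route via Cantelli's inequality is a legitimate and slightly different decomposition. It is worth noting that the Cantelli intermediate bound $\var{\dsRV}/(\var{\dsRV}+(\noaD-\eta)^2)$ is in fact sharper than the Paley--Zygmund intermediate bound $(\var{\dsRV}+2\eta\noaD-\eta^2)/(\var{\dsRV}+\noaD^2)$ for all $\eta\in(0,\noaD)$ (the difference of cross-products equals $\eta(2\noaD-\eta)(\noaD-\eta)^2\geq 0$), so your chain of inequalities loses no more than the paper's and could even be tightened. The trade-off is that the Paley--Zygmund argument reaches the final expression in one algebraic step, whereas Cantelli requires the extra comparison you carry out. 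Either way, your observations about the modelling assumption $E[\dsRV]=\noaD$ and the vacuity of the bound for $\eta$ near $\noaD$ are apt and go a bit beyond what the paper spells out.
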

\begin{proof}
	Since all trajectories in $\safeTrajSet$ and $\allDieTrajSet$ are specified in a bounded space \worldFrame, \dsRV is a bounded random variable, and therefore based on Popoviciu inequality, \var{\dsRV} is finite. Furthermore, since \dsRV represents distance, $\dsRV \geq 0$. These two properties allow us to apply the Paley-Zygmund inequality on \dsRV to obtain:
	\begin{eqnarray}
			P(d(\safeTrajP, \dieTrajP) < \eta) &=& 1 - P\left(d(\safeTrajP, \dieTrajP) \geq \frac{\eta}{\noaD}\noaD\right) \nonumber\\
&\leq& 1 - \frac{(1-\frac{\eta}{\noaD})^2  {\noaD}^2}{\var{\dsRV} + {\noaD}^2}	\nonumber \\
&=& 1 - \frac{(\noaD-\eta)^2}{\var{\dsRV} + {\noaD}^2} \nonumber \\
&=& \frac{\var{\dsRV} + 2 \eta \noaD - {\eta}^2}{\var{\dsRV} + {\noaD}^2} \nonumber \\
		&\leq& \frac{\var{\dsRV} + 2 \eta \noaD}{\var{\dsRV} + {\noaD}^2} \nonumber 
	\end{eqnarray}	
giving the desired upper bound. 
\end{proof}

Although \var{\dsRV} appears in the above bound, we only propose \noa as a safety indicator. The reason is two folds. First, estimating variance is harder than estimating expected value. 

The second reason is since \dsRV is bounded and $E[\dsRV]$ is finite, \var{\dsRV} can be made sufficiently small by increasing the size of \safeTrajSet and $\dieTrajSet{\safeTraj}$ for each $\safeTraj \in \safeTrajSet$, thereby allowing the probability $P(d(\safeTrajP, \dieTrajP) \leq \eta)$ to be bounded from above by a function that depends only on $\eta$ and \noaD.  For instance, setting the above set of trajectories such that $\var{\dsRV} \leq 2 \eta$ will further bound the probability $P(d(\safeTrajP, \dieTrajP) \leq \eta) \leq 2 \left(\frac{\eta}{{\noaD}^2} + \frac{\eta}{\noaD} \right)$. 
% since \var{\dsRV} is a bounded non-negative random variable, \var{\dsRV} will converge to $0$ as the size of \safeTrajSet and \allDieTrajSet go to $\infty$, which means, one can construct a sufficiently large trajectory sets to ensure \var{\dsRV} is sufficiently small, bounding the probability $P(d(\safeTrajP, \dieTrajP)$ from above by a function that is dominated by the inverse of \noaD.  For instance, setting the above set of trajectories such that $\var{\dsRV} \leq 2 \eta$ will further bound the probability $P(d(\safeTrajP, \dieTrajP) \leq \eta) \leq 2 \left(\frac{\eta}{{\noaD}^2} + \frac{\eta}{\noaD} \right)$.

Now, although one can reduce \var{\dsRV} to be arbitrarily small, the number of samples to ensure \var{\dsRV} is sufficiently small varies between one car and another. The reason is \var{\dsRV} is affected by uncertainty of the assessed system, including the variance on reaction time of the human driver if the car is semi-autonomous, uncertainty due to different road conditions, etc. too. 

Interestingly, the car's uncertainty is accounted in \noa too: Assuming all other conditions are the same, a car with larger stochastic uncertainty in its dynamics and perception will generate smaller \noa. This result may seem counter-intuitive, especially when the stochastic uncertainty over the effects of actions and observation are symmetric, considering \noa is an expected value. However, since our mechanism only computes the distance between a safe trajectory and kamikaze trajectories that are as close as possible to the safe trajectory, the symmetric effect is filtered out. Therefore, \noa will decrease as uncertainty increases, even if stochastic uncertainty in the effects of actions and observations are symmetric around its mean.

Intuitively, the above results show that \noa can be used to indicate the safety of an autonomous car. A car with  large \noa will have a small upper bound on the probability that an $\eta$ small deformation can turn a sampled safe trajectory of an adversary into an unsafe one.

\section{Generating Adversary's Trajectories}
\label{s:trajGen}

Given information about the environment and adversary $\adv = \sceneTuple$ and the assessed $\aCar = \aCarTuple$, to compute \noa, our mechanism requires sets of safe and kamikaze trajectories of the adversary. To generate each set of  trajectories, we construct a decision-making agent of the adversary, assuming that it has full observability about its own state and  partial observability about the assessed car's policy.  To account for partial observability of the car, the decision-making agent that represents the adversary is framed as a POMDP agent. 

If real data on safe trajectories or static trajectories from a regulatory body are available, the safe trajectories can use these trajectories too. However, if they are not available, the safe trajectories can be generated using POMDP.  
%The POMDPs for generating the safe and kamikaze trajectories differ only in their reward function. 

The following subsections describe the details of these POMDP models.

%The decision-making agent controlled by our mechanism is the adversary, e.g., the pedestrian crossing the street, rather than the autonomous car. However, the trajectories generated need to account for the predicted behaviour of the car, to mimic realistic behaviour of most adversaries.

\subsection{Generating Kamikaze Trajectories}
\label{s:kamiTrajGen}

Given a collision-free trajectory of the adversary $\safeTraj \in \safeTrajSet$, the kamikaze agent generates a strategy that collides itself with the assessed car (under the test scenario used) as fast as possible while minimizing the total distance between its trajectory and \safeTraj. The agent has full observability about itself and potentially imperfect information about the car's dynamics, based on \aCarDyn in \aCar. However, it only has partial observability about the policy of the assessed car. Therefore, this kamikaze trajectory generation is a form of pursuit evasion under partial observability problem, and our mechanism uses POMDP to generate the policy. A kamikaze trajectory is then the traces of  the adversary's positions in a single simulation run of the POMDP policy.
% generates the kamikaze trajectories used to compute \noa.

To compute a kamikaze strategy, the kamikaze agent maintains a simplified predictive model of the assessed car $\simCar = \simCarTuple$, where $\simCarStSp \subset \aCarStSp$ and $\aCarPos \subseteq \simCarStSp$, and $\appCarDyn = \aCarDyn + {\mathcal{P}}'_{car}$, where $\aCarSt \in \simCarStSp$, $\aCarAct \in \aCarActSp$, and ${\mathcal{P}}'_{car}$ is a probability distribution function representing  the fact that the car dynamic $F_{car}$ provided in \aCar can be far from perfect. This distribution function is learned from data, which can be from simulation provided by the car's developers, or via data from running the physical car. In this paper, they are learned using a simple maximum likelihood method on data gathered from running a simulated car in the high fidelity simulator, Carla\cite{CARLA17}. 

Suppose the POMDP $\pomdpDie = \pomdpDieTuple$ represents the kamikaze agent. 
%And, suppose the safe trajectory \safeTraj has been discretized into a sequence of points \safeTrajPts, where $p_i \in \worldFrame$ for $i \in [1, m]$. 
The state space $\stSpaceDie = \stSpaceDieAdv \times \simCarStSp \times \simCarPol$, where $\stSpaceDieAdv \subseteq \oStSp$,  $\oPos \subseteq \stSpaceDieAdv$. The kamikaze POMDP agent models a simplified policy of the assessed car as a parametric function, and \simCarPol represents the parameters of these policies. Note that by representing these parameters as a state variable, the uncertainty of the policy will be represented in the belief of the kamikaze agent. 

The action space $\actSpaceDie = \oActSp$. The transition function represents uncertainty in the resulting state of the adversary and assessed car. The transition function for the adversary and assessed car is conditionally independent given the current state. The adversary's transition function is defined as $\oDyn$, while the car's dynamic follows \appCarDyn.

The observation spaces and functions of the adversary depend on the type of sensors provided. However, the kamikaze agent assumes that the car model has full observability. 

The reward function \rewFDie is designed to encourage collision with the assessed car to happen while minimising distance of the adversary's position from the safe trajectory. To this end, we set the reward function, such that high reward is given when a collision between the adversary and assessed car happen, and higher penalty is given proportional to the distance between the adversary's position and \safeTrajSet in \worldFrame.

\subsection{Generating Safe Trajectories}
\label{s:safeTrajGen}

The POMDP agent $\pomdpSafe = \pomdpSafeTuple$ used to generate safe policies for the adversary is very similar to that used to generate the kamikaze policies. The only different is in \rewFSafe, the reward function is used to encourage the adversary to reach its intended destination as fast as possible without collision with the car. This optimality assumption follows the hypothesis in biology that human and other living beings generally tend to optimise their objective functions\cite{Breed:2015}, though in general the objective functions being optimised are unclear. In our test mechanism, we assume the objective function follows the objective function of the POMDP \pomdpSafe.

Note that the safe trajectories do not need to use the same car model as the one being assessed nor the one used to generate the kamikaze trajectories. After all, these synthetically generated safe trajectories are supposed to replace real world trajectories data of the adversary, which may not be operating against the assessed car. 

\section{Experiments}

The aim of our experiments is two folds. First is to understand how reasonable our \noa as an indicator of car safety. Second is to understand the required time for our proposed mechanism to output this safety indicator. 
%Last is to understand the effects of increasing uncertainty on the actions and perception of the car. 

\subsection{Scenarios}

To achieve our experimental goals, we use the high level scenarios of avoiding collision with a pedestrian crossing  the street, when the car is moving forward in a single lane. This scenario is similar to one of the scenarios used to test Autonomous Emergency Breaking systems in Euro-NCAP\cite{EuroNcap}.

%\hspace{-0.4cm}\textbf{The Car Controllers}

\subsubsection{The Cars}
\label{s:expACar}

We assume the car is moving forward in a single-lane road when a pedestrian is crossing the road. We test our proposed testing mechanism on two types of car controllers. 

The first type is the set of \emph{basic controllers}. This type of controllers is designed to systematically test \noa. These controllers assume the car starts from a given maximum velocity $8\frac{1}{3} m/s$ and apply the following policy:
\begin{equation*}
	\hspace{-6cm} \simCarPol(\aCarPos, \oPos) = 
\end{equation*}
\begin{equation*}
	\left\{ 
	\begin{array}{ll}
		\acc = -3.5m/s^2 + {\mathcal{U}}[-0.1\acc, 0.1\acc] & \nonumber \\
		\hspace{2cm}\textrm{longitudinal distance(\aCarPos, \oPos)} \leq \mult \times \safeDist \nonumber \\
		%\|\aCarPos - \oPos\| \leq \mult \times \safeDist \nonumber \\
		\acc = 0 & \hspace{-2cm}\textrm{Otherwise} \nonumber
	\end{array} \right.
	\nonumber
\end{equation*}
where \acc is the acceleration applied for a duration of $0.3s$ and ${\mathcal{U}}[-0.1\acc, 0.1\acc]$ is uniform distribution with support $[-0.1\acc, 0.1\acc]$, representing the car's uncertainty in the exact deceleration it performs. In addition to this uncertainty, the car's velocity is influenced by uncertainty too, such that the evolution of its speed is governed by $\vel_{t+1} = \vel_{t} + {\mathcal{U}}[-0.05 \vel_{t}, \, 0.05 \vel_{t}] + 0.3 \acc $. The notation \safeDist is  a safe distance threshold, defined as the distance for the car to move from the maximum velocity \maxVel m/s to $0$ m/s, assuming maximum deceleration (in this case, $3.5 m/s^2$) is applied and the car's motion is deterministic. This type of controller can be made more or less aggressive by applying a different multiplier \mult. Higher \mult means the controller has more margin of error, and therefore is safer. For our experiments, we apply 9 different values for \mult: $\{0.5, 0.625, 0.75, 0.875, 1.0, 1.05, 1.1, 1.125, 1.15\}$. The lower $C$ is, the more aggressive its behaviour is. Simulation run for this basic controllers are run on a simple C++ implementation based on the given model.

The second type of the car controller is the Machine-Learning based controller (ML) \cite{MLController2018}, which became a default controller of Carla. For our purpose, we ran the same controller, but focus only on two of the longitudinal control states \emph{Cruising} and \emph{Hazard Stop}. The other longitudinal control states [following, red  light, and over limit] were not relevant for the scenarios in our experiments. This controller is ran on Carla v.0.9.6, that allows pedestrian control for adversary interactions. 
%, and therefore, we turned off transitions into these controlling states were turned off. 

\subsubsection{The Adversary}

The adversary in this scenario is a pedestrian crossing a single-lane road in front of the car. We assume when the pedestrian moves, the pedestrian is moving with a constant velocity of $2.5m/s$. We also assume that the pedestrian motion is deterministic. It perceives observation on the distance between itself and the assessed car, but this observation is noisy.

\subsection{Trajectories Generators}

\subsubsection{Safe Trajectories}

We use two sets of safe trajectories for the pedestrian. The first set of safe trajectories is a real world data set, which is extracted from the scenario \emph{Lateral Interaction (Unilateral)} published in \cite{DUTRealData2019}. 

The second set of safe trajectories is synthetically generated using POMDP as described in \sref{s:safeTrajGen}, using a high fidelity simulator Carla v.0.9.6\cite{CARLA17} when the car is controlled using the Machine Learning controller\cite{MLController2018}. To generate this trajectory, the pedestrian actions are deterministic, with the action space being \(A^{\circ} = \langle North,South, East,NorthEast,SouthEast,Stay\rangle\), where the first five actions in the set correspond to movement actions of the agent along the cardinal direction with a displacement magnitude of \(|d| = 2.5 m/s\), and the last action allows the adversary to remain in its position. The car model used by the POMDP agent is based on maximum likelihood, learned using simulated data from Carla v.0.9.6.  The adversary observes the distance between itself and the car, with noise given by Gaussian distribution  ${\mathcal{N}}(0, 1)$.
% for the basic controller and \cmt{${\mathcal{N}}( mean, var)$} for the ML controller. We set the noise in the ML controller to be higher to offset for our learned policy estimate.

\subsubsection{Kamikaze Trajectories}

The kamikaze trajectories are generated using POMDP as described in \sref{s:kamiTrajGen}. 

The pedestrian model of this POMDP is the same as that used by the POMDP to generate safe trajectories. However, the model of the assessed car needs to be in-line with \aCar. 

For the basic controller, the POMDP agent knows the controller's multiplier \mult, but not the full uncertainty plaguing the car controller (described in \sref{s:expACar}. Specifically, it only models the velocity uncertainty and not the acceleration uncertainty in its car model. Furthermore, it has a noisy estimate of its relative position to the assessed car. 
%The POMDP agent also knows the car's dynamic and its uncertainty, as described in \sref{s:expACar}. 

For the ML controller, we use the same learned car model used by the POMDP to generate safe trajectories. 

Last but not least, when real data is used as the safe trajectory, we adjusted the car model to the data. We adjusted the car's geometry to match the car in the real data. The adjusted size is smaller than the car model used in the basic controller. Therefore, we also adjusted the parameters of its dynamics to ensure that collision may still happen. 

\subsection{Setup}

%% Computer and Programs
All trajectory generations, simulation runs, and \noa computation were ran on a desktop with an Intel Core i7-8700 @ 3.20 GHz CPU and 32GB RAM and NVidia GeForce GTX1060 with 6GB dedicated RAM. The GPU is used only for Carla simulation. To generate POMDP-based safe and kamikaze trajectories, we use OPPT\cite{hoerger18:OPPT}, which is a POMDP toolkit for on-line POMDP solving, designed to ease applying POMDPs to robot planning problems. To compute \noa, we use the algorithm and implementation as described in\cite{DiscreteFretchet1994}.

\subsection{Results}

To assess \noa systematically, we applied our testing mechanism to assess the basic car controller with varying multiplier \mult, as described in \sref{s:expACar}. For this assessment, for each \mult value used and each type of safe trajectories (real and synthetic data), we generated 5 safe trajectories and 100 kamikaze trajectories per safe trajectory. We then computed \noa for each basic controller and each type of safe trajectories by averaging the Fr\'{e}chet distance of the 500 pairs of safe and kamikaze trajectories. The \noa and its 95\% confidence interval, along with its collision rate are presented in \fref{f:skdCol}(a).

\begin{figure}[!ht]
	\begin{tabular}{cc}
		\includegraphics[width=8.5cm]{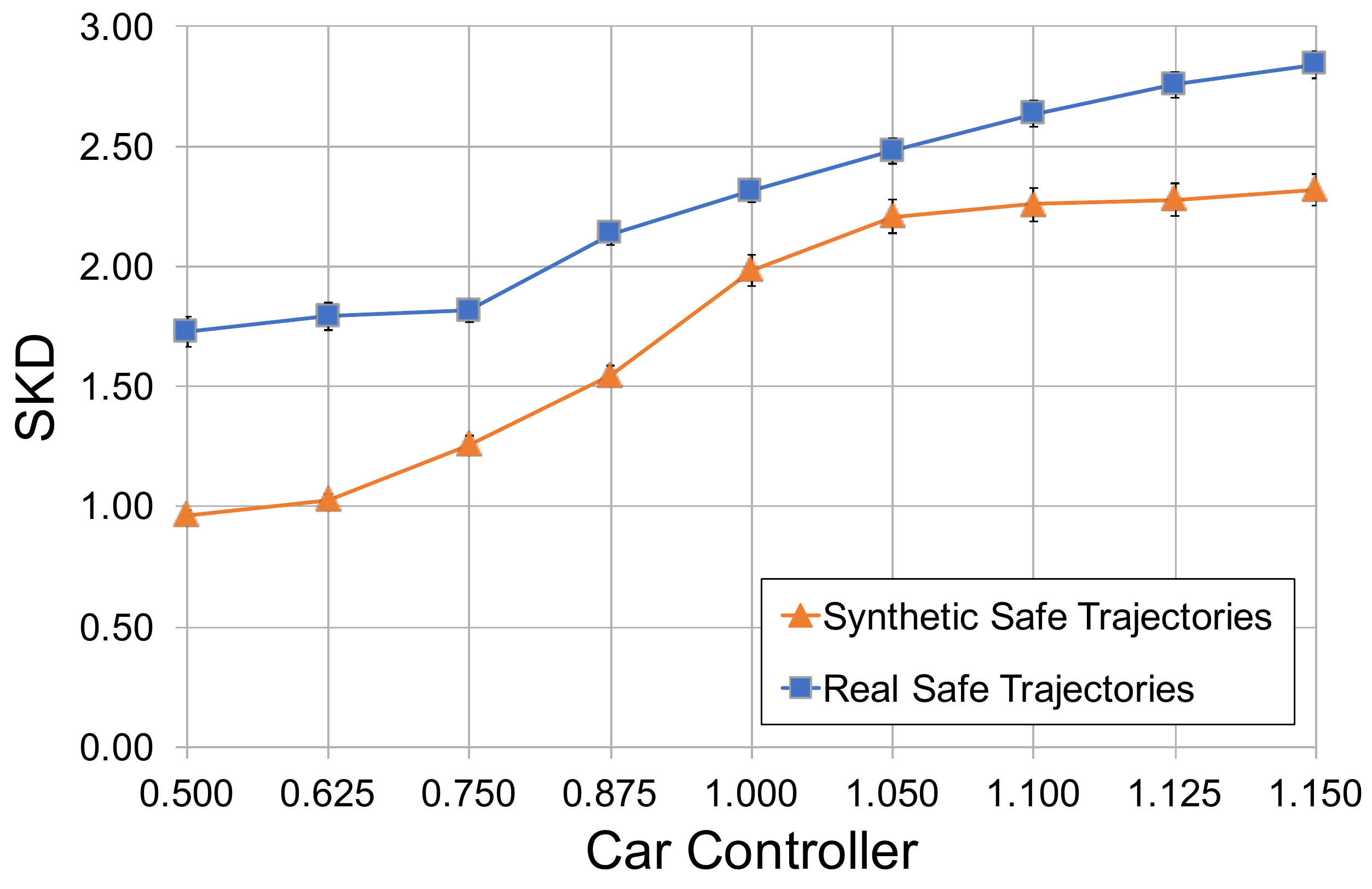} & \includegraphics[width=8.5cm]{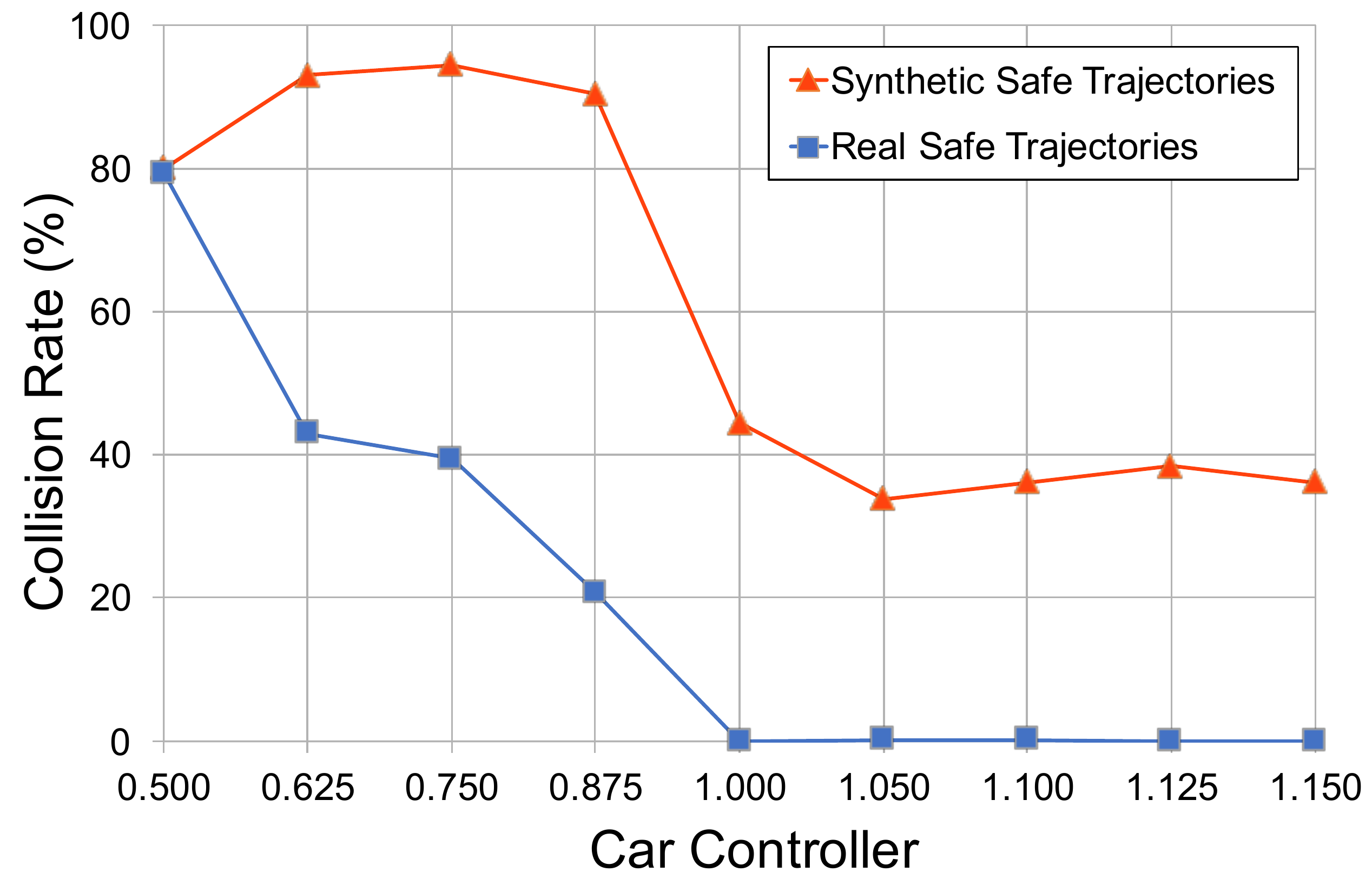} \\
		(a) & (b) \\
	\end{tabular}
	\caption{\noa and collision rate results.}
	\label{f:skdCol}
\end{figure}

They indicate that \noa increases as \mult increases, regardless of how the safe trajectories are generated, which is in-line with our design that as \mult increases, the controller has more margin of error, and hence are safer.  

For verification purposes on how safe or dangerous the different controllers are, for each \mult value used and each safe trajectories, we also compute the collision rate between 500 simulations of the car \aCar moving forward, while the pedestrian follows the safe trajectories  to cross the road. The results are presented in \fref{f:skdCol}(b).

It is interesting to compare \noa across the two types of safe trajectories. The consistently higher \noa in the safe trajectories extracted from real data is mostly in-line with the consistently lower collision rate of this set of scenarios. 

This above trend only differs when $\mult = 0.5$. The collision rate for this particular controller when the safe trajectories are extracted from real data is only slightly lower than when the safe trajectories are synthetically generated (79\% vs 80\%), while their \noa differs by more than $0.5$ points. This controller is the only one where on average, the car only starts to stop when it is already too late (i.e., when the car can no longer avoid collision). In such a situation, error in the acceleration of the car, which dominates error in the velocity (\sref{s:expACar}), can substantially influence the collision rate.  Since this acceleration error is not modelled in the POMDP that generates the kamikaze trajectories, this dominating factor is not considered in \noa computation. A possible remedy and somewhat typical in practical applications of POMDP is to model the system with larger uncertainty. 

%%%%%%%%%%%%%%%%%%%%%%%%%%%% ML TABLE %%%%%%%%%%%%%%%%
\begin{table}[h]
\centering
\caption{ML-Controller Assessment}
\label{t:mlRes}
\begin{tabular}{|c|c|c|}
\hline
\multicolumn{3}{|c|}{\textbf{ML-Controller}}                        \\ \hline
\textbf{Safe Trajectories} & \textbf{\begin{tabular}[c]{@{}c@{}}SKD \(\pm\) 95 C.I\end{tabular}} & \textbf{Collision Rate (\%)} \\ \hline
Synthetic Data   & 2.31 \(\pm\) 0.08   & 0   \\ \hline
Real Data        & 5.70 \(\pm\) 0.18   & 0   \\ \hline
\end{tabular}
\end{table}
%%%%%%%%%%%%%%%%%%%%%%%%%%%%%%%%%%%%%%%%%%%%%%%%%%
We also tested our assessment mechanism on the ML controller. For this purpose, we used the same 5 safe trajectories per type as above and 100 kamikaze trajectories per safe trajectory, and simulate them in Carla v.0.9.6. We computed \noa and collision rate based on these 500 pairs of safe and kamikaze trajectories. The results are in \tref{t:mlRes}.

%%%%%%%%%%%%%%%%%%%%%%%%%%%%%%%%%%%%%%%%%%%%%%%%% TABLE %%%%%%%%%%%%%%%%%%%%%%%%%%%%%%%%%%%%%%%%%%%%%%%%%%%%%
\begin{sidewaystable}[!h]
\centering
\caption{Time taken to generate safe and kamikaze trajectories, Fr\'{e}chet distance, \noa, and the total time.}
\label{t:timeRes}
\begin{tabular}{|c|c|c|c|c|c|c|c|}
\hline
\multirow{2}{*}{\textbf{\begin{tabular}[c]{@{}c@{}}Car\\ Controller\end{tabular}}} &
  \multicolumn{4}{c|}{\textbf{Synthetic Data}} &
  \multicolumn{3}{c|}{\textbf{Real Data}} \\ \cline{2-8} 
 &
  \textbf{\begin{tabular}[c]{@{}c@{}}SafeTrajGen\\ Avg \\ 95 C.I (ms)\end{tabular}} &
  \textbf{\begin{tabular}[c]{@{}c@{}}KamikazeGen\\ Avg \\ 95 C.I (ms)\end{tabular}} &
  \textbf{\begin{tabular}[c]{@{}c@{}}FretchetTime\\ Avg \\ 95 C.I (ms)\end{tabular}} &
  \textbf{\begin{tabular}[c]{@{}c@{}}Total SKD\\ Avg \\ 95 C.I (ms)\end{tabular}} &
  \textbf{\begin{tabular}[c]{@{}c@{}}KamikazeGen\\ Avg \\ 95 C.I (ms)\end{tabular}} &
  \textbf{\begin{tabular}[c]{@{}c@{}}FretchetTime\\ Avg \\ 95 C.I (ms)\end{tabular}} &
  \textbf{\begin{tabular}[c]{@{}c@{}}Total SKD\\ Avg \\ 95 C.I (ms)\end{tabular}} \\ \hline
0.500 &
  5535.41 \(\pm\) 3.10 &
  2437.99 \(\pm\) 3.10 &
  0.60 \(\pm\) 0.01 &
  7974 \(\pm\) 127.19 &
  1950.79 \(\pm\) 21.98 &
  0.398 \(\pm\) 0.008 &
  1951.19 \(\pm\) 21.98 \\ \hline
0.625 &
  5535.41 \(\pm\) 3.10 &
  2452.62 \(\pm\) 3.00 &
  0.62 \(\pm\) 0.01 &
  7988.65 \(\pm\) 127.19 &
  1975.86 \(\pm\) 22.17 &
  0.404 \(\pm\) 0.008 &
  1976.26 \(\pm\) 22.17 \\ \hline
0.750 &
  5535.41 \(\pm\) 3.11 &
  2471.53 \(\pm\) 0.60 &
  0.65 \(\pm\) 0.01 &
  8007.59 \(\pm\) 127.19 &
  2133.56 \(\pm\) 17.48 &
  0.455 \(\pm\) 0.011 &
  2134.02 \(\pm\) 17.48 \\ \hline
0.875 &
  5535.41 \(\pm\) 3.11 &
  2741.11 \(\pm\) 10.46 &
  0.72 \(\pm\) 0.02 &
  8277.24 \(\pm\) 127.59 &
  2173.64 \(\pm\) 18.15 &
  0.414 \(\pm\) 0.008 &
  2174.05 \(\pm\) 18.15 \\ \hline
1.000 &
  5535.41 \(\pm\) 3.12 &
  3023.28 \(\pm\) 29.95 &
  0.76 \(\pm\) 0.02 &
  8559.45 \(\pm\) 130.64 &
  2297.44 \(\pm\) 21.20 &
  0.455 \(\pm\) 0.010 &
  2297.89 \(\pm\) 21.2 \\ \hline
1.050 &
  5535.41 \(\pm\) 3.12 &
  4172.54 \(\pm\) 145.18 &
  0.78 \(\pm\) 0.02 &
  9708.72 \(\pm\) 192.99 &
  2364.42 \(\pm\) 25.75 &
  0.465 \(\pm\)0.011 &
  2364.89 \(\pm\) 25.75 \\ \hline
1.100 &
  5535.41 \(\pm\) 3.13 &
  4038.40 \(\pm\) 124.23 &
  0.77 \(\pm\) 0.02 &
  9574.58 \(\pm\) 177.77 &
  2473.88 \(\pm\) 42.84 &
  0.490 \(\pm\) 0.013 &
  2474.37 \(\pm\) 42.84 \\ \hline
1.125 &
  5535.41 \(\pm\) 3.13 &
  3491.84 \(\pm\) 101.59 &
  0.82 \(\pm\) 0.02 &
  9028.07 \(\pm\) 162.76 &
  2380.11 \(\pm\) 20.21 &
  0.498 \(\pm\) 0.013 &
  2380.61 \(\pm\) 20.21 \\ \hline
1.150 &
  5535.41 \(\pm\) 3.14 &
  3511.95 \(\pm\) 43.86 &
  0.80 \(\pm\) 0.02 &
  9048.16 \(\pm\) 134.51 &
  2482.93 \(\pm\) 18.69 &
  0.487 \(\pm\) 0.012 &
  2483.41 \(\pm\) 18.69 \\ \hline
ML &
  5535.41 \(\pm\) 3.14 &
  5391.41 \(\pm\) 100.36 &
  0.83 \(\pm\) 0.02 &
  10927.65 \(\pm\) 161.99 &
  7985.79 \(\pm\) 165.88 &
  1.051 \(\pm\) 0.029 &
  7986.85 \(\pm\) 165.88 \\ \hline
\end{tabular}
\end{sidewaystable}
To understand the feasibility of conducting the proposed testing mechanism frequently, we need to look into the time required to compute the above results. \tref{t:timeRes} presents the time taken for safe and kamikaze trajectory generations, Fr\'{e}chet distance computation, and the total time. These results indicate that a simulation run, which is one data point to compute \noa, can be computed in under 11s. The time taken to assess the ML controller is longer than that to assess the basic controller because of the high-fidelity simulator used and the more complex computation required by the controller. However, even the ML controller took less than 11ms. Moreover, if we compute the synthetic safe trajectories ahead of time and reuse them ---a practice that will reduce the variance when computing \noa---, the time required for a single simulation run regardless of the controller is under 8s. Of course, to have statistical confidence, we need to run these simulations many times. However, this process is embarrassingly parallel. 

Of course, if the tests were to be conducted directly on the physical system, more time and effort would be required to run them multiple times. In this case, we imagine combining physical and simulation results 
%, and selecting the safe trajectories carefully to be those scenarios with high risk may 
would be beneficial to reduce the number of repeated physical robot testing required. But, further work are necessary to combine the two results well.

\section{Summary}

In this paper, we propose a testing mechanism to assess the safety of  autonomous cars, inspired by the NCAP safety rating. Core to our proposal is a similarity measure \noa, which uses the Fr\'{e}chet distance between the adversary's safe trajectories and kamikaze trajectories closest to those safe trajectories. We found the average of such a distance is inversely proportional to the upper bound of the probability that a small deformation turns a safe trajectory into a dangerous one. Systematic tests on simulation and a test on a Machine-Learning controller using a high-fidelity simulator corroborate this characteristics of \noa. 

The time taken for each simulation test is under 11 seconds if we include parts of the scenario generations, and under 8 seconds otherwise. Therefore, it is feasible for the proposed testing mechanism to generate sufficient statistics from simulation on a quad-core desktop in 15-30 minutes, which is equivalent to the typical time one uses for washing a car. The speed of this assessment opens the possibility for a relatively frequent safety assessment to be taken in simulation, for instance after every patch or software update is performed.

Moreover, the safe trajectory can be based on static scenarios as defined by regulatory bodies, which will likely be more acceptable by the regulators.

Future work abounds. More exhaustive testing using a variety of state-of-the-art Machine Learning controllers  are needed to better understand the effectiveness of \noa as a safety measure. In the computation of the component of the testing mechanism, plenty of rooms are available to improve the  kamikaze trajectory generation. In this paper, we assume a very simple car dynamics. Applying a more realistic car dynamics would require solving differential game under partial observability. Furthermore, ensuring the kamikaze trajectory generated is close to the corresponding safe trajectory requires solving constrained differential game problems. 
%The number of tests to be run can be reduced if we can identify critical scenarios more accurately. In this case, work on rare event simulation\cite{Kel18:Scalable} can be used to generate suitable testing scenarios. 
Currently, \noa is computed based on simulation results alone. How could we combine tests on simulation test and physical robot, so as to obtain a good safety measure fast? 

Last but not least, we believe planning under uncertainty techniques could help develop viable \emph{user-focused} safety indicator and testing mechanisms for autonomous systems, and hope this work encourages further exploration in this direction.

\section*{Acknowledgments}

This work is supported by the Assuring Autonomy International Programme and ANU Futures Scheme.

%% Use plainnat to work nicely with natbib. 

\bibliographystyle{plainnat}
\bibliography{references}

\end{document}